\DeclareMathOperator*{\argmax}{arg\,max}
\def\eg{{\emph{e.g., }}}
\def\ie{{\emph{i.e., }}}
\newcommand{\bihan}[1]{{\color{black}#1}}
\title{Benchmarking Adversarial Robustness of Image Shadow Removal with Shadow-adaptive Attacks}
\name{Chong Wang$^1$, Yi Yu$^{1,2}$, Lanqing Guo$^1$, and Bihan Wen$^{1*}$\thanks{$^*$Bihan Wen is the corresponding author.}}
\address{$^1$School of Electrical and Electronic Engineering, Nanyang Technological University, Singapore\\
$^2$ROSE Lab, Interdisciplinary Graduate Programme, Nanyang Technological University, Singapore}
\begin{document}
%

\newtheorem{theorem}{Theorem}

\maketitle
\begin{abstract}
\bihan{Shadow removal is a task aimed at erasing regional shadows present in images and reinstating visually pleasing natural scenes with consistent illumination.}
\bihan{While recent deep learning techniques have demonstrated impressive performance in image shadow removal, their robustness against adversarial attacks remains largely unexplored.}
\bihan{Furthermore, many existing attack frameworks typically allocate a uniform budget for perturbations across the entire input image, which may not be suitable for attacking shadow images.}
\bihan{This is primarily due to the unique characteristic of spatially varying illumination within shadow images.}
\bihan{
In this paper, we propose a novel approach, called shadow-adaptive adversarial attack. Different from standard adversarial attacks, our attack budget is adjusted based on the pixel intensity in different regions of shadow images. 
Consequently, the optimized adversarial noise in the shadowed regions becomes visually less perceptible while permitting a greater tolerance for perturbations in non-shadow regions.} 
\bihan{The proposed shadow-adaptive attacks naturally align with the varying illumination distribution in shadow images, resulting in perturbations that are less conspicuous.}
\bihan{Building on this, we conduct a comprehensive empirical evaluation of existing shadow removal methods, subjecting them to various levels of attack on publicly available datasets.}

\end{abstract}
\begin{keywords}
Shadow Removal, Adversarial Robustness, Shadow-adaptive Attack
\end{keywords}
\begin{figure}[!t]
\centering
\includegraphics[width=1.\linewidth]{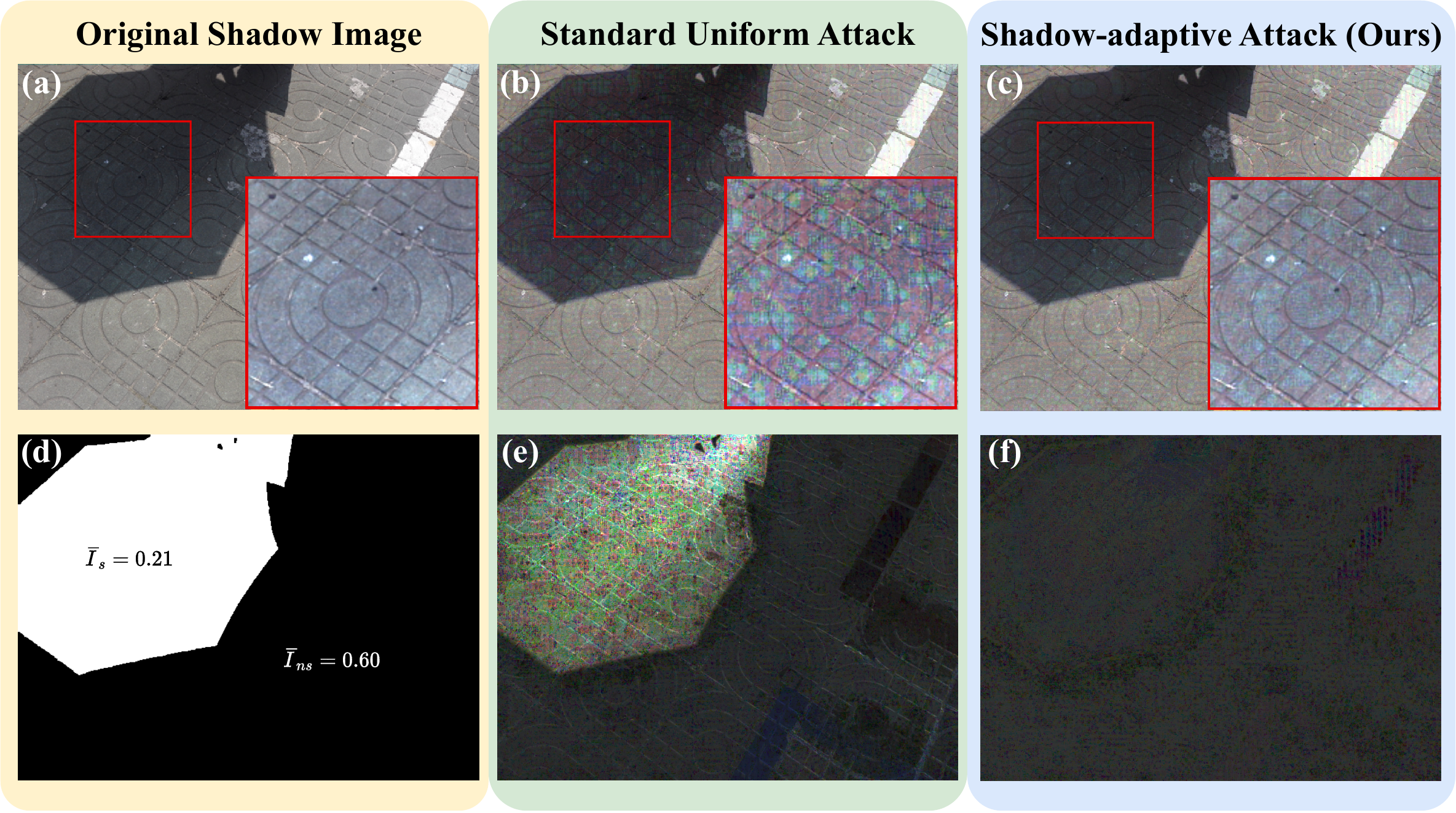} 
\vspace{-8mm}
\caption{
\bihan{Comparison of the adversarial examples from (a) a shadow image from the ISTD+ dataset, using (b) the standard uniform attack, and (c) the proposed shadow-adaptive attack.}
\bihan{(d) The shadow mask indicates that the average pixel intensity within the shadow and non-shadow regions, denoted as $\bar{I}_s$ and $\bar{I}_{ns}$, are significantly different. (e) and (f) show the normalized perturbation $\delta / I$ generated by the two attacks, which better reflect the visual sensitivity of the adversarial perturbations.}
The zoomed-in subfigures in (a)-(c) and the normalized perturbations in (e)$\&$(f) are multiplied by $3\times$ for better visibility.
}
\vspace{-0.2cm}
\label{fig_attack_comparison} 
\end{figure}

\section{Introduction}
\label{sec:intro}
Shadows are a ubiquitous occurrence in natural images, particularly when light sources are partially or completely obstructed. 
The presence of shadow can significantly impair various subsequent computer vision tasks, \eg object detection~\cite{sanin2010improved}, tracking~\cite{nadimi2004physical}, and face recognition~\cite{zhang2018improving}.
Consequently, shadow removal has garnered substantial attention from researchers and is recognized as one of the fundamental image restoration tasks in recent years~\cite{zhang2018improving, yang2012shadow}.
Traditional approaches to shadow removal have primarily relied on illumination modeling~\cite{xiao2013fast} or physics priors in the shadow images, \eg region consistency~\cite{guo2012paired} and image gradients~\cite{gryka2015learning}.
Unfortunately, such methods using hand-crafted priors are often excessively idealistic, thus limiting the practical performance when dealing with complex real-world shadow images.
Recent advancements in shadow removal have been driven by learning-based methods harnessing the potential of extensive datasets, showing superior performance~\cite{wan2022style, guo2023shadowformer, guo2023shadowdiffusion, zhu2022bijective}.
In these cases, networks are trained in an end-to-end fashion, enabling them to acquire an inherent mapping between shadow and their corresponding shadow-free images.

However, these learning-based methods, due to their ability to adapt to the training data, often display a heightened sensitivity to specific input images. 
Consequently, trained networks are susceptible to vulnerabilities, and their predictions can significantly deteriorate in response to subtle, visually imperceptible perturbations in the input images~\cite{madry2017towards,yu2023backdoor}.
The robustness against such perturbations, namely adversarial robustness, has become a popular topic that received growing attention in computational imaging as well as other vision communities~\cite{apostolidis2021survey, yu2022towards}.
Despite its importance, there has been a notable absence of comprehensive evaluations regarding the adversarial robustness of deep shadow removal models, despite its status as an ongoing research challenge.

Current attack frameworks typically generate perturbations with a spatially uniform budget allocated to each pixel in the input images.
However, shadow images inherently exhibit regional illumination inconsistencies, resulting in notable disparities in pixel intensity between shadowed and non-shadowed regions. Consequently, existing attacks employing uniform budgets may struggle to generate visually imperceptible adversarial noise, particularly within the shadowed regions, as shown in Figure~\ref{fig_attack_comparison}(b).
In this work, 
we embark on a pioneering exploration of the robustness of learning-based shadow removal models against adversarial attacks. 
We propose a novel shadow-adaptive adversarial attack, meticulously designed for shadow removal tasks. 
In this attack, the budget for perturbing each pixel in input images is dynamically adjusted according to the pixel intensity in different regions.
Unlike existing uniform attacks, our proposed shadow-adaptive approach naturally aligns with the spatially varying illumination present in shadow images, resulting in perturbations that are notably less perceptible. 
Research~\cite{wieberconstant} has shown that human vision is more sensitive to disturbances when normalized by the original intensity, while the proposed adaptive attacks could still keep the stealth with respect to the normalized perturbation as shown in Figure~\ref{fig_attack_comparison} (e) and (f).
Building upon this, we conduct a comprehensive benchmark evaluation of existing deep shadow removal networks with various attack levels on two public datasets, namely ISTD and ISTD+.
To our knowledge, no work to date has thoroughly studied the adversarial robustness of deep shadow removal models, and we believe that our benchmark and empirical analysis could provide insights for building up robust shadow removal methods.



\begin{figure*}[t]
\centering
\includegraphics[width=1.0\linewidth]{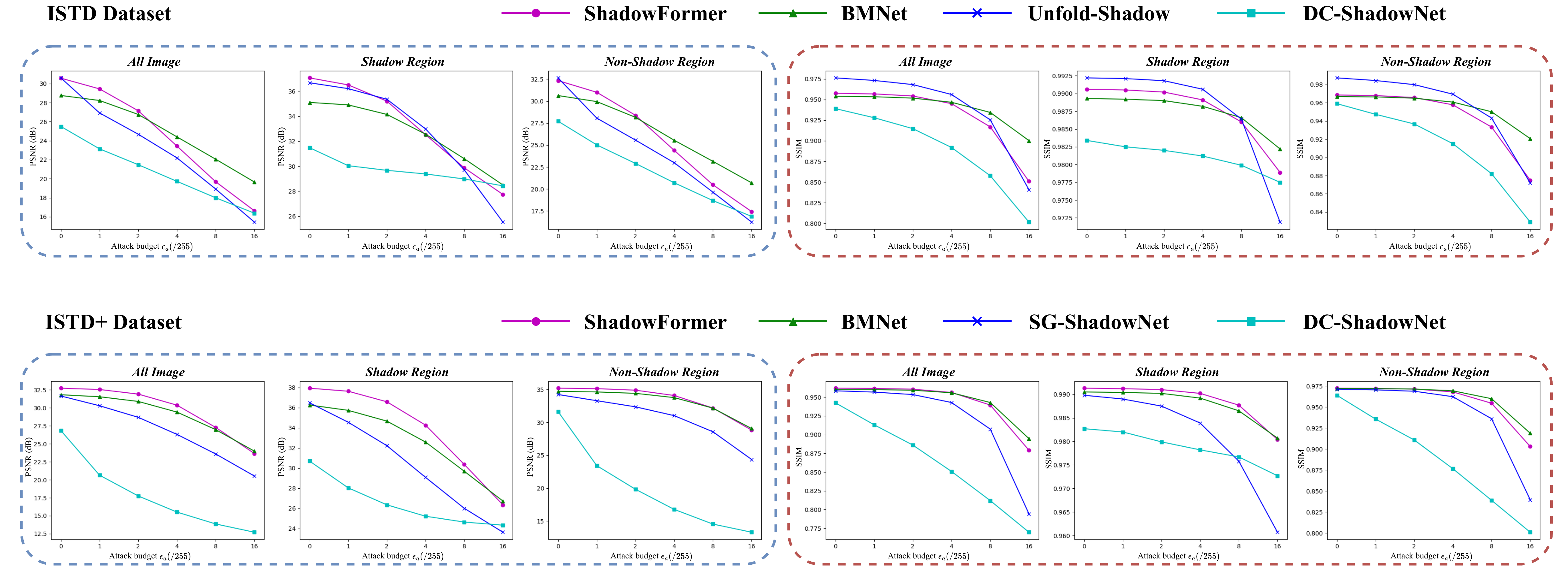} 
\vspace{-5mm}
\caption{
Adversarial robustness of five existing shadow removal models against our proposed shadow-adaptive attack with various attack budget $\epsilon_a$ evaluated by PSNR and SSIM on ISTD~\cite{wang2018stacked} and ISTD+~\cite{le2019shadow} datasets.
Each subfigure in the dashed line indicates the results of the whole image, shadow region, and non-shadow region, respectively.
}
\vspace{-1mm}
\label{fig_benchmark_curve} 
\end{figure*}

\section{Adversarial Attacks for Shadow Removal Tasks}
The objective of adversarial attacks is to introduce perturbations into input images that are visually imperceptible, yet result in a substantial degradation of the output quality produced by a deep shadow removal model.
Given the absence of prior research addressing adversarial attacks in the context of shadow removal, we initiate our study by formally defining this problem as follows.

Given an input shadow image $I$, a shadow-free estimation $\hat{I}$ can be predicted by a well-trained shadow removal network $f_\theta(\cdot)$, that is $\hat{I} = f_\theta(I)$.
To deteriorate the shadow removal results, a small unnoticeable perturbation $\delta$ is optimized to contaminate each input shadow image, as
\begin{equation}\label{eq_ori_delta}
    \delta = \argmax_{\lVert\delta\rVert_\infty \leq \epsilon} D\langle f_\theta(I), f_\theta(I+\delta)\rangle,
\end{equation}
where $\epsilon$ is the attack budget to constrain the maximum value of $\delta$ and $D\langle\cdot, \cdot\rangle$ denotes the metrics that measure the distance of the attacked shadow removal result and its corresponding original one.
In this work, we mainly focus on the $\ell_2$ distance (\ie $D\langle\cdot, \cdot\rangle=\lVert \cdot \rVert_2$), which is widely used in existing robustness studies~\cite{yu2022towards, song2023robust}.
The optimization problem in~\eqref{eq_ori_delta} can be solved iteratively via PGD by utilizing the first-order information about the network:
\begin{align}
    \tilde{\delta}^{t+1} &= \delta^t + \alpha \cdot \mathrm{sgn}(\nabla_{\delta} \lVert f_\theta(I) - f_\theta(I+\delta)\rVert_2),\\
    \delta^{t+1} &= \mathrm{clip}_{[-\epsilon, \epsilon]\cap[-I, 1-I]}(\tilde{\delta}^{t+1}), \label{eq_clip}
\end{align}
where $\mathrm{sgn}(\cdot)$ and $\nabla$ denote the sign function and gradient operation, respectively, $\alpha$ controls the step size during each iteration thus preventing noticeable modification on the attacked input images.
The step in~\eqref{eq_clip} prevents the generated perturbation from exceeding the attack budget $\epsilon$ while also maintaining the pixel intensity of attacked images lies in $[0, 1]$. 
The initial $\delta^0$ is sampled from the uniform distribution $U(-\epsilon, \epsilon)$, and the final perturbation $\delta$ is optimized for total $T$ iterations.

\section{Shadow-Adaptive Adversarial Attack}
The standard adversarial attack framework (\ref{eq_ori_delta}) generates perturbation with a uniform budget $\epsilon$ allocated to each pixel in input images. 
However, this uniform-budget approach may not ensure imperceptibility, particularly in the context of shadow removal. 
Shadow images, due to the occlusion of the light sources, exhibit a distinctive property characterized by variations in illumination between shadowed and non-shadowed regions. 
Specifically, the pixel intensity in shadow regions is notably lower than that in non-shadowed regions.
Consequently, perturbations in the shadow region can become conspicuous when generated using a uniform attack budget across the entire image.

To preserve visual imperceptibility, we propose a shadow-adaptive attack where the perturbation is under an adaptive budget according to each pixel intensity, that is 
\begin{equation}\label{eq_adap_delta}
    \delta = \argmax_{\lVert\delta / I \rVert_\infty \leq \epsilon} \lVert f_\theta(I), f_\theta(I+\delta)\rVert_2.
\end{equation}
Rather than directly controlling the attack noise $\delta$, we introduce a constraint related to the proportion of the perturbation relative to pixel intensity, denoted as the \textit{normalized perturbation} $\delta / I$.
This adaptive attack strategy yields perturbations that align with the illumination distribution within the input shadow image. 
In particular, the attack noise remains moderate in regions with low pixel intensity, i.e., shadow regions, to ensure imperceptibility, while permitting stronger perturbation in non-shadow regions, facilitating effective attacks.

To impose a fair comparison between the proposed adaptive adversarial attack and the uniform attacks, Theorem~\ref{theorem_1} shows that the maximal achievable perturbations of the two approaches are equivalent using the $\ell_1$-norm.
\begin{theorem}\label{theorem_1}
    Let $\epsilon_{a}$ denotes the attack budget in our proposed shadow-adaptive attacks in~\eqref{eq_adap_delta}, we can set the budget $\epsilon_{u} = \epsilon_a  \bar{I}  $ for standard uniform attack in~\eqref{eq_ori_delta} to achieve an equivalent attack strength, where $\bar{I}$ is the mean value of the input shadow image $I$.
\end{theorem}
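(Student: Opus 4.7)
The plan is to unpack the two $\ell_\infty$-style constraints into pixelwise bounds, compute the maximum $\ell_1$-norm of the perturbation that is achievable under each constraint, and then show that equating these two maxima forces the claimed relation $\epsilon_u = \epsilon_a \bar{I}$.

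First I would fix notation. Let $I \in [0,1]^N$ denote the vectorized shadow image with $N$ pixels, and write $\bar{I} = \tfrac{1}{N}\sum_{i=1}^N I_i$. The uniform attack in \eqref{eq_ori_delta} enforces $\|\delta\|_\infty \leq \epsilon_u$, which is the pixelwise bound $|\delta_i| \leq \epsilon_u$ for all $i$. The shadow-adaptive attack in \eqref{eq_adap_delta} enforces $\|\delta/I\|_\infty \leq \epsilon_a$, which unpacks pixelwise to $|\delta_i| \leq \epsilon_a I_i$. Both constraint sets are axis-aligned boxes around the origin, so the maximum $\ell_1$-norm achievable is obtained by saturating each coordinate bound.

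Next I would compute the two maxima explicitly. For the uniform case, $\max_{\|\delta\|_\infty \le \epsilon_u} \|\delta\|_1 = N\epsilon_u$. For the shadow-adaptive case, $\max_{\|\delta/I\|_\infty \le \epsilon_a} \|\delta\|_1 = \sum_{i=1}^N \epsilon_a I_i = \epsilon_a \sum_{i=1}^N I_i = \epsilon_a N \bar{I}$. Equating the two expressions yields $N\epsilon_u = \epsilon_a N \bar{I}$, i.e., $\epsilon_u = \epsilon_a \bar{I}$, which is the statement of the theorem.

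The only subtlety I see, and the place I would spend the most care, is the additional clipping $[-I,\,1-I]$ that appears in \eqref{eq_clip}: this box constraint keeps $I+\delta$ in $[0,1]$ and is therefore active independently of the attack budget. I would note that as long as $\epsilon_a \le 1$ (so $\epsilon_a I_i \le I_i$ and $\epsilon_a I_i \le 1 - I_i$ whenever $I_i \le \tfrac{1}{2}$, etc.), the shadow-adaptive constraint is the binding one in the low-intensity regions we care about, while the corresponding uniform bound $\epsilon_u = \epsilon_a \bar{I}$ is also small enough to leave the validity clip inactive generically. Thus the $\ell_1$-equivalence computed above is not distorted by \eqref{eq_clip}, and Theorem~\ref{theorem_1} follows.
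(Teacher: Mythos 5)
Your proposal is correct and follows essentially the same route as the paper: both arguments unpack the two $\ell_\infty$-type constraints into pixelwise bounds, observe that the maximum achievable $\ell_1$-mass is $N\epsilon_u$ for the uniform attack and $\epsilon_a\sum_i I_i = \epsilon_a N\bar{I}$ for the adaptive one, and equate them to obtain $\epsilon_u = \epsilon_a\bar{I}$. Your additional remark about the validity clip $[-I,\,1-I]$ goes slightly beyond the paper's proof, which silently ignores that constraint, but it does not change the argument.
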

\begin{proof}
Let $i$ denotes the pixel index in shadow input $I\in \mathbb{R}^n$, $\delta_a$ and $\delta_u$ denote the perturbation generated from the proposed adaptive attack~\eqref{eq_adap_delta} and uniform attacks~\eqref{eq_adap_delta}, respectively.
In the adaptive attack, since the perturbation on each pixel is bounded by its intensity as $\lvert \delta^{(i)}_a \rvert \leq \epsilon_a  I^{(i)}$, then we have
\begin{align*}
    \frac{1}{n} \rVert \delta_a \rVert_1 \leq \frac{1}{n} \sum_i^n \epsilon_a  I^{(i)}  = \epsilon_a  \frac{1}{n} \sum_i^n   I^{(i)}  = \epsilon_a \bar{I}.
\end{align*}
Similarly, in the uniform attacks, the perturbation is upper bounded by $\frac{1}{n}\rVert \delta_u \rVert_1 = \frac{1}{n} 
 \sum_i^n\lvert \delta^{(i)}_u \rvert \leq \epsilon_u= \epsilon_a  \bar{I}$.
Thus the maximum achievable perturbation of two types of attack is equivalent under the $\ell_1$-norm.
\end{proof}
In the following sections, we conduct an empirical evaluation of the proposed shadow-adaptive attack in comparison with the traditional uniform attack, as well as its attacking effectiveness on various existing deep shadow removal models.

\begin{figure*}[t]
\centering
\includegraphics[width=0.99\linewidth]{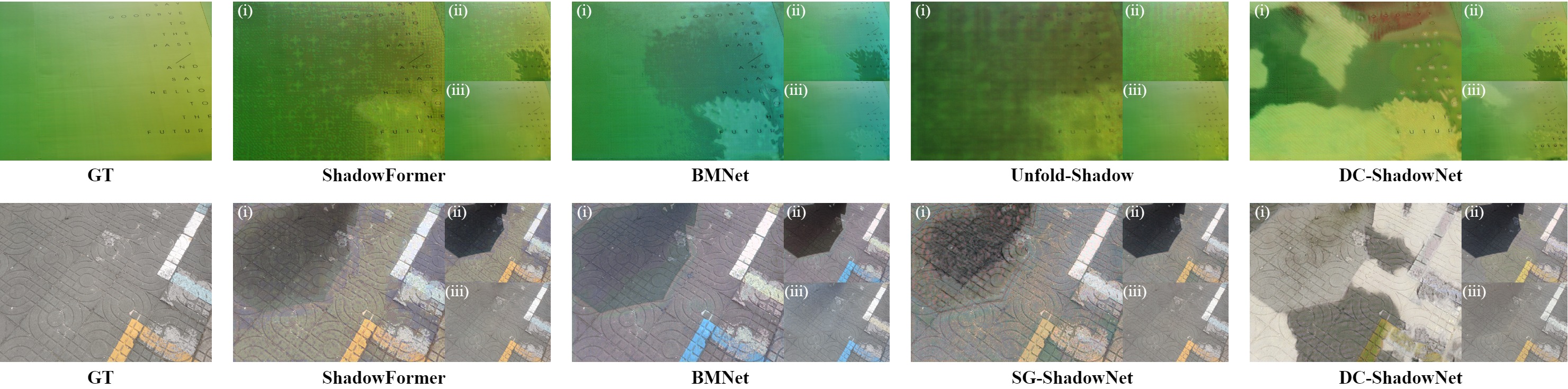} 
\vspace{-3mm}
\caption{
Visual comparison under our shadow-adaptive attack with budget $\epsilon_a$ = 16/255 on the ISTD dataset~\cite{wang2018stacked} (top row) and ISTD+ dataset~\cite{le2019shadow} (bottom row). 
Subfigures (i), (ii), and (iii) represent attacked shadow removal results, perturbed input shadow images, and original shadow removal results, respectively.
}
\vspace{-1mm}
\label{fig_benchmark_visual} 
\end{figure*}

\begin{table}[t]
\vspace{-2mm}
\centering
\vspace{-1mm}
\caption{The quantitative comparison of our proposed shadow-adaptive attack and standard uniform attacks.}
\label{tab_attack_comparison}
\adjustbox{width=0.9\linewidth}{
\begin{tabular}{cccccc}
\toprule
Dataset       &                  & \multicolumn{2}{c}{ISTD} & \multicolumn{2}{c}{ISTD+} \\ \midrule
Attack budget & Attack method & PSNR       & SSIM        & PSNR        & SSIM        \\ \midrule
\multirow{2}{*}{$\epsilon$ = 8/255} & Uniform & 19.40 & 0.9156 & 26.44 & 0.9404 \\
              & Ours adaptive    & 19.68      & 0.9168      & 27.28       & 0.9395      \\ \midrule
\multirow{2}{*}{$\epsilon$ = 16/255} & Uniform & 16.44 & 0.8526 & 22.74 & 0.8846 \\
              & Ours adaptive    & 16.62      & 0.8509      & 23.65       & 0.8791      \\ \midrule
             \multicolumn{2}{c}{Clean}     & 30.58     & 0.9574      & 32.73       & 0.9619\\
              
\bottomrule
\end{tabular}
\vspace{-1mm}
}
\end{table}

\section{Experiments}
\label{sec:benchmarking}
We first present an empirical comparison of the proposed shadow-adaptive attack and standard uniform attacks, to validate its effectiveness and stealth.
Based on that, we further conduct a comprehensive robustness evaluation of various existing learning-based shadow removal methods against the proposed shadow-adaptive attack.

\subsection{Implementation Details}
\noindent\textbf{Datasets and Metrics.}
We conduct evaluations on two benchmark datasets of shadow removal: (1) ISTD dataset~\cite{wang2018stacked} that includes 1,330 training and 540 testing triplets of shadow, shadow mask, and shadow-free images. (2) Adjusted ISTD (ISTD+) dataset~\cite{le2019shadow} that reduces the illumination inconsistency between the shadow and shadow-free images of ISTD. 
We adopt the Peak Signal-to-Noise Ratio (PSNR) and Structural Similarity (SSIM) as the evaluation metrics.
Besides, following the previous works~\cite{guo2023shadowdiffusion, wan2022style, zhu2022bijective}, we also compute these metrics of shadow and non-shadow regions separately. 

\noindent\textbf{Shadow Removal Models.}
We consider five state-of-the-art learning-based methods in shadow removal, including four supervised methods, namely ShadowFormer~\cite{guo2023shadowformer}, BMNet~\cite{zhu2022bijective}, Unfold-Shadow~\cite{zhu2022efficient} and SG-ShadowNet~\cite{wan2022style}, as well as one unsupervised method DC-ShadowNet~\cite{jin2021dc}.

\subsection{Shadow-adaptive Attack vs. Uniform Attack}\label{subsec_attack_compare}
\noindent\textbf{Attack Details.}
The adversarial attacks are implemented based on PGD~\cite{madry2017towards} for total iteration $T=20$.
For the proposed shadow-adaptive attack, we choose two different budgets $\epsilon_a = 8/255 $ and $16/255$.
To achieve an equivalent attack strength, the budget for the standard uniform attack is accordingly set to $\epsilon_{u} = \epsilon_a  \bar{I}$, where $\bar{I}$ is the average pixel intensity of each input shadow image.
The step size is adjusted according to the attack budget as $\alpha = \epsilon / 4$ for each experiment.

\noindent\textbf{Results and Analysis.}
A well-trained ShadowFormer~\cite{guo2023shadowformer} is adopted as the attack objective.
Table~\ref{tab_attack_comparison} shows the quantitative results of shadow removal after the proposed shadow-adaptive attack and standard uniform attack on ISTD and ISTD+ datasets, respectively, where the lower PSNR and SSIM indicate more aggressive attacks.
While our proposed shadow-adaptive attack could achieve comparable attack performance to the standard uniform one, more importantly, the generated perturbation is significantly more imperceptible, as shown in Figure~\ref{fig_attack_comparison} (b)$\&$(c).
Even when attack strength is large (\eg $\epsilon_a=16/255$), our adaptive adversarial noise still maintains its imperceptibility.
Compared to the absolute value of noise, studies~\cite{wieberconstant} show that human vision is more sensitive to disturbances when scaled by the original intensity.
Thus we compared the normalized perturbation $\delta/I$ to reflect their visual sensitivity, as presented in Figure~\ref{fig_attack_comparison} (e)$\&$(f).
With the merit of the shadow-adaptive attack budget, the normalized perturbation from our attack keeps consistent stealth across the whole image while that from the uniform attack is significantly noticeable in the shadow region.

\subsection{Benchmarking the Adversarial Robustness}
\noindent\textbf{Attack Details.}
We adopt the proposed shadow-adaptive attack to evaluate the robustness of existing deep shadow removal models.
We set the attack budgets $\epsilon_a \in \{1/255, 2/255, \\
4/255, 8/255, 16/255\}$, regarding to $\ell_\infty$-norm.
The settings for step size $\alpha$ and iteration $T$ are the same as Section~\ref{subsec_attack_compare}.

\noindent\textbf{Results and Remarks.}
Figure~\ref{fig_benchmark_curve} compares the PSNR and SSIM results of existing deep shadow removal models against our proposed shadow-adaptive attacks under various attack budgets $\epsilon_a$.
On both datasets, our shadow-adaptive attack can effectively degrade the performance of existing methods.
The adversarial attack is more effective on the ISTD dataset, where all the baselines dropped around 10dB on PSNR when the attack strength is large ($\epsilon_a = 16/255$).
The robustness against adversarial attacks is also different between shadow and non-shadow regions.
For those supervised models, they are more robust in the shadow region with respect to PSNR on the ISTD dataset and vice versa on the ISTD+ dataset.
However, for the unsupervised model DC-ShadowNet~\cite{jin2021dc}, the robustness is stronger in the shadow region compared to the non-shadow region on both datasets.
The visual results in Figure~\ref{fig_benchmark_visual} further justified the observation above.
For those supervised methods, there are evident artifacts in the non-shadow region on the ISTD dataset (top row in Figure~\ref{fig_benchmark_visual}) while much more distortion in the shadow region on the ISTD+ dataset (bottom row in Figure~\ref{fig_benchmark_visual}).
As for the DC-ShadowNet~\cite{jin2021dc}, there exist substantial block-wise artifacts in the non-shadow region while the corruption in the shadow region is relatively milder.

\noindent\textbf{Discussion.}
From the visual example in Figure~\ref{fig_benchmark_visual}, although our adaptive perturbation is almost invisible in subfigures (ii), the prediction results of each method in subfigures (i) are significantly degraded compared to the original shadow removal results in subfigures (iii).
The different robustness between shadow and non-shadow regions provides us insight into designing a robust model that possesses balanced robustness in these two regions, which could be crucial in shadow removal.
We will be devoted to this robust solution in future work.


\section{Conclusion}
\label{sec:conclusion}
In this work, we introduce a shadow-adaptive adversarial attack, which is specifically designed for the shadow removal task. 
We conduct both theoretical and empirical comparisons of the proposed adaptive attack with standard uniform attacks, and notably, our adaptive attack could achieve better imperceptibility, especially in the shadow region.
Based on that, we further present a comprehensive empirical evaluation on the robustness of existing deep shadow removal models against the proposed shadow-adaptive attack.
We believe our benchmark could provide insight into designing robust shadow removal models against adversarial attacks.

\bibliographystyle{IEEEbib}
\bibliography{refs}

\end{document}